\documentclass[11pt]{article}
\usepackage[utf8]{inputenc}
\usepackage[T1]{fontenc}
\usepackage{amsmath,amssymb,amsthm}
\usepackage[margin=1.1in]{geometry}
\usepackage{booktabs}
\usepackage{graphicx}
\usepackage{natbib}
\usepackage{hyperref}
\hypersetup{colorlinks=true,linkcolor=blue,citecolor=blue,urlcolor=blue}

\newtheorem{theorem}{Theorem}
\newtheorem{proposition}[theorem]{Proposition}

\newtheorem{conjecture}[theorem]{Conjecture}
\theoremstyle{definition}
\newtheorem{definition}[theorem]{Definition}
\newtheorem{remark}[theorem]{Remark}

\newcommand{\Inc}{\mathrm{Inc}}
\newcommand{\conv}{\mathrm{conv}}
\newcommand{\R}{\mathbb{R}}
\newcommand{\E}{\mathbb{E}}

\title{Understanding as No-Arbitrage:\\ Bounded Dutch Books as a Definition and Training Objective\\ for Language Models}
\author{Daniel Dragonevskiy\\
  \normalsize \texttt{ispromashka@gmail.com}}
\date{September 2026}

\begin{document}
\maketitle

\begin{abstract}
Does a language model merely predict tokens, or does it understand what it says? The question stays undecidable until understanding is given some operational meaning. We propose one, borrowing from inferential-role semantics and the oldest device for keeping graded commitments honest: the Dutch book. A model understands a vocabulary, relative to an inference system and a degree $(k,\varepsilon)$, if no trader from a bounded class $\mathcal{T}_k$ can extract guaranteed profit above $\varepsilon$ by betting against its elicited credences on logically related claims. That makes understanding measurable, down to a single word or relation.

Four results follow. A ladder theorem ties bounded trader classes to local relaxations of the marginal polytope: full coherence is \textsf{NP}-hard to check, so any bounded system can only understand by degrees, and the theorem predicts how that degree should generalize. A second theorem shows that the exact optimum of negative log-likelihood, on a corpus of mutually inconsistent but stylistically distinguishable authors, is incoherent across elicitation contexts with a guaranteed profit computable in closed form: the defect sits in the training objective, not the architecture, and scale does not remove it. A $\delta$-generalization of Adams' theorem shows coherent credences degrade additively along derivation chains, and that overconfidence about a derived claim becomes an arbitrage opportunity in its own right, unlike the silent error compounding of autoregressive sampling. Finally we specify a training scheme, \textsc{Arbitr}, in which a trader adversary supplies the missing cross-context term.

Five pre-registered experiments test this. E-A0 measures exploitability across models from 1.5B to 7B parameters: within-format credences are already close to coherent, but cross-format and negation books extract guaranteed profit up to $0.15$ per unit stake, worst at 7B on facts it is unsure of. E-A1 trains against the exact adversary and cuts negation exploitability two orders of magnitude, transferring to an untrained, higher-level pattern in every seed, but half the runs collapse into an uninformative equilibrium, exactly as predicted for an objective missing its calibration term. E-A1b adds that term; every registered criterion then passes, at no cost to task accuracy. E-A2 replaces the hand-specified adversary with a learned one that discovers exploitable books on its own, and shows partial transfer to 7B. E-A3 tightens the attribution. An ablation without the trader shows that the cross-format gain comes from the adversarial term rather than the calibration anchor, while on negation the anchor alone accounts for a factor of about $2.4$; the mechanism replicates on a second model family (Phi-3.5) and carries over to a negation template never seen in training, with a measurable residue of wording-specificity. It also exposes the approach's sharpest limit: at 7B, near-zero measured incoherence coincides with near-maximal confidence on most items, so part of the apparent coherence at scale is confident degeneration rather than calibration. We are careful about what this does not show: factual hallucination is untouched, and coherence is necessary for knowledge, not sufficient.
\end{abstract}

\section{Introduction}

Are large language models ``stochastic parrots'' that predict tokens without understanding, or does next-token prediction suffice for understanding on its own? The debate has run for years without a criterion either side could lose by \citep{bender2020climbing,mitchell2023debate,sogaard2025semantics}. Probing work has weakened the strong ``blind correlation'' reading: next-token predictors pick up linearly decodable, causally effective world models \citep{li2023othello,nanda2023othello,gurnee2024space}. Yet the same models routinely violate elementary probability identities across logically related questions \citep{zhu2024incoherent,andrews2026dutchbooks}, and their multi-step reasoning degrades in ways their stated confidence never reflects \citep{dziri2023faith,bachmann2024pitfalls}.

We think the more useful question is not whether LLMs understand, but what measurable property would count as understanding, and what training objective would produce it. Inferential-role semantics ties the meaning of an expression to the inferences it licenses \citep{brandom1994}; we operationalize that idea with the oldest formal tool for keeping graded commitments honest, the Dutch book \citep{definetti1937}. The resulting definition is behavioral, graded, word-level, and---the part that matters for training---adversarially trainable.

\paragraph{Definition (informal).} Fix an inference system $I$ and a family of elicitation contexts. A model understands a vocabulary to degree $(k,\varepsilon)$ if no trader in a bounded class $\mathcal{T}_k$ (portfolios supported on at most $k$ propositions instantiating rules of $I$, including cross-context ``law of one price'' books) achieves guaranteed profit exceeding $\varepsilon$ against the model's elicited credences.

\paragraph{Contributions.} Exploitability-as-distance is classical \citep{schervish2002measures,debona2015measuring}, and we do not claim it. What we do claim:
\begin{enumerate}
  \item \textbf{A graded ladder} (Thm.~\ref{thm:ladder}): $\mathcal{T}_k$-unexploitability turns out to equal $k$-local consistency, membership in a local relaxation of the marginal polytope. The polyhedral combinatorics here is standard (the local polytope of \citealp{wainwright2008}; Sherali--Adams relaxations \citealp{sherali1990}; PSAT complexity \citealp{georgakopoulos1988,pitowsky1991}), and we are not claiming any of it as new. What is new, we think, is the dictionary between bounded traders and relaxation levels, and its consequence: since full coherence is \textsf{NP}-complete, a bounded agent can only understand by degrees, which turns ``degree of understanding'' into a complexity-theoretic quantity with a generalization prediction attached. Sum-of-squares pseudodistributions as ``computationally bounded Bayesian beliefs'' \citep{barak2014sos} and bounded-trader logical induction \citep{garrabrant2016} anticipate the idea conceptually.
  \item \textbf{An objective-defect theorem} (Thm.~\ref{thm:nll}): on a mixture corpus of internally consistent, mutually inconsistent authors whose style correlates with identity, the exact NLL optimum turns out to be incoherent across contexts, with guaranteed arbitrage profit equal to the style--author correlation. Per context the optimum is coherent---a mixture of coherent credences is coherent for unconditional events, cf.\ opinion-pooling results \citep{madansky1964,genest1986}---so the defect is precisely located: NLL never scores one proposition across two contexts. No amount of data or scale removes this; only a cross-context term added to the objective can.
  \item \textbf{Priced additive degradation} (Thm.~\ref{thm:adams}), a $\delta$-generalization of Adams' theorem \citep{adams1975,gilio2002}: $\delta$-coherent credences degrade at most additively along derivation chains, and overstating terminal confidence relative to premise prices is itself an arbitrage opportunity. That degradation is visible in the price is the useful part---autoregressive sampling compounds errors silently \citep{dziri2023faith,lecun2023}.
  \item \textbf{A training scheme} (\textsc{Arbitr}, \S\ref{sec:arbitr}): a small trader network learns to construct profitable books over batches whose admissible worlds are known by construction, and the model is penalized by the realized profit. Coherentizing at any level cannot hurt proper-score accuracy (a small extension of \citealp{predd2009}, Prop.~\ref{prop:dominance}), so the two loss terms do not fight each other at the optimum. Bid--ask spreads (lower/upper previsions, \citealp{walley1991}) give ``I have no basis to commit'' an architectural home; calibration bettors \citep{foster1998} punish unfounded precision and keep the spread from collapsing under gradient pressure. Per-word exploitability $\Inc_w$ gives individual lexical items their own understanding certificates.
\end{enumerate}
We also ran a pre-registered measurement, E-A0, of $\mathcal{T}_2$/$\mathcal{T}_3$-exploitability across model scales (\S\ref{sec:ea0})---the cheapest way to find out whether the whole program is moot, i.e.\ whether deployed models were already close enough to coherent that none of this matters.

\section{Related work}\label{sec:related}

\paragraph{Incoherence measures.} Measuring incoherence via normalized guaranteed Dutch-book loss goes back to \citet{schervish2002measures,schervish2003measures}. \citet{debona2015measuring} show that distance-based and Dutch-book-based inconsistency measures coincide and are LP-computable; \citet{potyka2014} studies $p$-norm minimal-violation measures; \citet{staffel2015,staffel2019} work out the philosophy. Our Prop.~\ref{prop:linf} restates this lineage compactly. \citet{andrews2026dutchbooks} compute LP Dutch books against LLMs, as a measurement only. \citet{andrews2026revealed} proposes LP-computable penalties from representation theorems (de Finetti, Afriat, Echenique--Saito) as label-free evaluation metrics and candidate regularizers, with the bettor appearing only as an LP variable---no bounded trader classes, no learnable adversary, no training experiments. That is the gap this paper's method and experiments fill.

\paragraph{Probability logic.} Additive degradation under $p$-validity is Adams' theorem \citep{adams1975}; precise propagation of bounds in System~P is due to \citet{gilio2002}; probabilistic entailment as LP is \citet{nilsson1986,hailperin1996}; PSAT complexity is \citet{georgakopoulos1988}; correlation polytopes and their local relaxations are \citet{pitowsky1991}. Approximate coherence and its graded benefits: \citet{debona2017approximately}.

\paragraph{Bounded rationality via traders and pseudodistributions.} Logical induction defines rational credence by unexploitability against polynomial-time traders \citep{garrabrant2016}; sum-of-squares pseudodistributions formalize beliefs of computationally bounded Bayesians \citep{barak2014sos}. We borrow both intuitions and put them to work at finite, learnable scale.

\paragraph{LLM (in)coherence and consistency training.} Systematic probability-identity violations in LLMs: \citet{zhu2024incoherent}; non-Bayesian updating: \citet{llmnotbayesian2026}. Consistency has been enforced at decode time \citep{kassner2021belief,mitchell2022concord} or by losses tied to fixed fact/rule sets \citep{calanzone2024}; paraphrase-consistency losses: \citet{elazar2021}. RL-style ``self-consistency'' rewards agreement among samples, not probabilistic coherence over related propositions \citep{selfctrl2026}. Semantic entropy \citep{farquhar2024} is the special case of our law-of-one-price books restricted to paraphrase families. \citet{betz2023} train coherent credences on synthetic corpora without an adversary. Negation-driven, objective-level failures: \citet{asher2024negation}. To put the kinship plainly: the training component here is consistency fine-tuning in the lineage of \citet{elazar2021} and \citet{calanzone2024}, generalized to a market semantics. What is added is an adversary that searches for violations instead of enumerating them, settlement by admissible worlds instead of a fixed fact list, a graded trader-class ladder with its own generalization prediction, and a no-arbitrage reading that ties the loss back to a definition.

\paragraph{World models.} The world-model program \citep{lecun2022path} goes after the same slogan, ``prediction is not understanding,'' architecturally: replace token prediction with latent-state prediction and planning. Our results sit next to that, not against it---coherence of elicited credences is something any architecture serving as an epistemic interface needs, and Thm.~\ref{thm:nll} is about the objective given an architecture, not about the architecture itself. Nothing here bears on whether latent world models are needed for grounding or planning. But a JEPA-style system that answers questions still owes its users credences that don't contradict themselves across phrasings.

\paragraph{Credal and adversarial learning.} Credal deep learning trains classifiers to output probability intervals \citep{caprio2023,creinns2024,credalensembles2024}; Credal LLMs \citep{credalllm2026} derive intervals post hoc from adapter ensembles. None provide avoiding-sure-loss guarantees or an anti-collapse mechanism; none train an LM end to end. Adversarial training with logical opponents exists in non-probabilistic form (Ehrenfeucht--Fra\"iss\'e games over graphs with a depth-$k$ curriculum, \citealp{logan2025}) and as LLM discriminators of reasoning soundness without stakes \citep{gar2025}.

\paragraph{Hallucination lower bounds.} Calibrated models forced to answer must hallucinate on missing mass \citep{kalai2024}; these bounds don't care about representation or objective. Nothing below contradicts them---coherence is a different failure class (\S\ref{sec:limits}).

\section{Setup}\label{sec:setup}

Let $\Phi$ be a finite set of natural-language propositions carrying a known logical structure (atoms, connectives, certified paraphrase/entailment relations from an inference system $I$). A \emph{world} is an assignment $w \in \{0,1\}^\Phi$ consistent with $I$; $W$ denotes the set of worlds and $C := \conv\{w : w \in W\}$ the coherent polytope (de Finetti): $q \in C$ iff $q$ extends to a probability measure over worlds.

A model $M$ induces \emph{credences} $q_M(\varphi; c) \in [0,1]$ through an \emph{elicitation family}: contexts $c \in E(\varphi)$ (question formats, languages, paraphrastic framings, irrelevant-context perturbations) mapped to the model's normalized probability of an affirmative answer. Coherence of the \emph{assembled} credence function requires both (a) membership in $C$ and (b) \emph{law of one price} (LOP): $q(\varphi;c) = q(\varphi;c')$ for $c,c' \in E(\varphi)$. LOP books are the cross-context instruments; semantic entropy \citep{farquhar2024} measures exactly LOP dispersion over paraphrase families.

A \emph{portfolio} is $x \in \R^\Phi$, $\|x\|_1 \le 1$: buy $x_\varphi$ shares of $\varphi$ at price $q(\varphi)$; a share pays 1 iff $\varphi$ holds. Guaranteed profit: $g(x) = \min_{w \in W} \sum_\varphi x_\varphi (w(\varphi) - q(\varphi))$; exploitability $\Inc(q) := \max_{\|x\|_1 \le 1} g(x)$.

\begin{definition}[Bounded trader classes]
$\mathcal{T}_k$ is the set of portfolios whose support lies in some $S \in \mathcal{S}_k$, where $\mathcal{S}_k$ collects supports of size $\le k$ instantiating rules of $I$ with $\le k$ premises (negation and LOP pairs at $k{=}2$; conjunction, modus ponens, transitivity triples at $k{=}3$; longer chains and quantifier instances above). $\Inc_k(q)$ is the corresponding restricted maximum.
\end{definition}

\begin{definition}[Graded understanding]\label{def:understanding}
$M$ \emph{understands} $\Phi$ to degree $(k,\varepsilon)$, relative to $I$ and the elicitation family, iff $\Inc_k(q_M) \le \varepsilon$. For a word $v$, the \emph{understanding certificate} $\Inc_{k,v}$ restricts the maximum to books pivoting on $v$ (supports whose propositions differ in the inferential role of $v$).
\end{definition}

\section{Results}\label{sec:results}

\begin{proposition}[Exploitability is distance; known]\label{prop:linf}
$\Inc(q) = \min_{\mu \in C} \|q - \mu\|_\infty$. In particular $\Inc(q)=0$ iff $q \in C$.
\end{proposition}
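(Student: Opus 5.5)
The plan is to rewrite exploitability as a bilinear saddle-point problem and swap the order of optimization with a minimax theorem; the distance formula then drops out of $\ell_1$/$\ell_\infty$ norm duality.

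\emph{Step 1 (worlds to the polytope).} For a fixed portfolio $x$, the map $\mu \mapsto \sum_\varphi x_\varphi(\mu(\varphi) - q(\varphi))$ is affine. Its minimum over $C=\conv\{w : w\in W\}$ is therefore attained at a vertex, which is a world. Hence
\[
g(x) = \min_{w\in W} \langle x, w-q\rangle = \min_{\mu\in C} \langle x, \mu - q\rangle ,
\]
and so $\Inc(q) = \max_{\|x\|_1\le 1}\, \min_{\mu\in C} \langle x,\mu-q\rangle$. Here I assume $W\neq\emptyset$, i.e.\ $I$ is consistent; otherwise $C$ is empty and both sides of the proposition degenerate.

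\emph{Step 2 (minimax swap).} The objective $\langle x,\mu-q\rangle$ is bilinear in $(x,\mu)$. Both the $\ell_1$ unit ball and $C$ are nonempty compact convex polytopes. Von Neumann's minimax theorem (or Sion's) therefore lets me exchange $\max_x$ and $\min_\mu$. If one prefers to stay with polytopes, the same exchange follows from LP duality: write $g(x)$ with an auxiliary variable $t \le \langle x, w - q\rangle$ for each $w\in W$, and dualize. This justification is the only nontrivial step and the one I expect to need the most care. Since everything is a finite-dimensional polytope it is standard, but I would state explicitly that compactness of both sets is what is used.

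\emph{Step 3 (dual norm).} For fixed $\mu$, $\max_{\|x\|_1\le 1}\langle x,\mu-q\rangle = \|\mu-q\|_\infty$, by putting all mass, with the appropriate sign, on a coordinate of largest $|\mu(\varphi)-q(\varphi)|$. Combining the three steps gives $\Inc(q)=\min_{\mu\in C}\|q-\mu\|_\infty$. The minimum exists because $C$ is compact.

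\emph{Step 4 (the ``in particular'').} Because $C$ is closed, the minimal $\ell_\infty$ distance is $0$ iff $q\in C$. Note also that $\Inc(q)\ge 0$ always, from $x=0$, which is consistent with the distance being nonnegative. No earlier result of the paper is needed beyond the setup in \S\ref{sec:setup}; the statement is the classical result of \citet{schervish2002measures,debona2015measuring} in compact form.
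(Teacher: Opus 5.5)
Your proposal is correct and follows the paper's proof essentially step for step. It replaces worlds by the polytope $C$ via the vertex argument, exchanges $\max$ and $\min$ by Sion's (or von Neumann's) minimax theorem using compactness, convexity and bilinearity, and reads off the $\ell_\infty$ distance by $\ell_1$/$\ell_\infty$ duality; your explicit remarks on $W\neq\emptyset$ and on closedness of $C$ for the ``in particular'' clause fill in details the paper leaves implicit.
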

\begin{proof}
For fixed $x$, $\min_{w \in W}\langle x, w-q\rangle = \min_{\mu \in C}\langle x, \mu - q\rangle$ (a linear function attains its minimum on a polytope at a vertex). Both feasible sets are convex and compact and the objective is bilinear, so Sion's minimax theorem permits the exchange:
$\Inc(q) = \min_{\mu\in C}\max_{\|x\|_1\le 1}\langle x,\mu-q\rangle = \min_{\mu\in C}\|\mu-q\|_\infty$.
\end{proof}
\begin{remark}
This is a short route to a known equivalence between Dutch-book and distance-based incoherence measures \citep{debona2015measuring,schervish2002measures,potyka2014}. We use it here only as a building block.
\end{remark}

\begin{theorem}[Ladder of understanding]\label{thm:ladder}
(a) $\Inc_k(q) = \max_{S \in \mathcal{S}_k} \; \mathrm{dist}_\infty\!\big(q|_S,\ \conv(W|_S)\big)$: bounded exploitability equals the largest violation of $k$-local coherence constraints, i.e.\ $\mathcal{T}_k$-unexploitability is membership in a local (Sherali--Adams-style) relaxation of the marginal polytope. (b) $\Inc_2 \le \Inc_3 \le \dots \le \Inc$, with equality at $k = |\Phi|$. (c) Verifying $\Inc_k(q) \le \varepsilon$ for fixed $k$ is polynomial; verifying $\Inc(q)=0$ is \textsf{NP}-complete \citep{georgakopoulos1988,pitowsky1991}.
\end{theorem}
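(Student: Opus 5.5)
For part (a) we reduce to Prop.~\ref{prop:linf}, applied support by support. Fix $S \in \mathcal{S}_k$ and restrict to portfolios $x$ supported on $S$. Then $\langle x, w-q\rangle = \langle x|_S, w|_S - q|_S\rangle$, so the guaranteed profit only depends on the projected world set $W|_S = \{w|_S : w\in W\}$. Running the minimax argument of Prop.~\ref{prop:linf} on $\R^S$ gives
\[
\max_{\mathrm{supp}(x)\subseteq S,\ \|x\|_1\le 1}\ \min_{w\in W}\langle x,w-q\rangle \;=\; \min_{\mu\in\conv(W|_S)}\|q|_S-\mu\|_\infty .
\]
Here we use that a linear function on $\conv(W|_S)$ is minimized at a vertex, which lies in $W|_S$. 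We also use that $\conv(W|_S)$ is the coordinate projection of $C$, since the projection of a convex hull is the convex hull of the projection. The left side maximized over $S\in\mathcal{S}_k$ is $\Inc_k(q)$, because $\mathcal{T}_k$ is a finite union of these restricted portfolio sets. This proves the displayed formula. For the ``local relaxation'' reading, $\Inc_k(q)=0$ iff $q|_S \in \conv(W|_S)$ for every $S$. That says $q$ has locally consistent pseudo-marginals on every admissible $k$-support, which is the local polytope of \citet{wainwright2008} and Sherali--Adams level $k$ when $\mathcal{S}_k$ contains all $k$-subsets relevant to the rules. We will state this identification as one of definition rather than prove an exact equality with the lifted Sherali--Adams polytope.

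For part (b), monotonicity follows once $\mathcal{S}_k \subseteq \mathcal{S}_{k+1}$, i.e.\ once the rule instances with at most $k$ premises are nested in $k$. That holds by definition of ``size $\le k$''. We should state this assumption explicitly, together with downward closure if it is needed. Then $\mathcal{T}_k\subseteq\mathcal{T}_{k+1}\subseteq\{\|x\|_1\le1\}$, so the maxima are nondecreasing and bounded by $\Inc$. For equality at $k=|\Phi|$ we need $\Phi\in\mathcal{S}_{|\Phi|}$, i.e.\ the full support is admissible at the top level. The convention that ``longer chains and quantifier instances above'' eventually include every support gives this. With it, $\mathcal{T}_{|\Phi|}$ is the whole unit ball. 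Alternatively, part (a) with $S=\Phi$ gives $\mathrm{dist}_\infty(q,C)=\Inc(q)$ by Prop.~\ref{prop:linf}.

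Part (c) is the main obstacle, because its meaning depends on how $W$ is presented. For fixed $k$, the number of supports is polynomial, at most $\binom{|\Phi|}{k}$ or the number of rule instances. For each $S$, $W|_S$ has at most $2^k$ elements, an $O(1)$ set. Membership of each candidate pattern can be decided by checking satisfiability of the rules restricted to $S$ with $k$ bounded. Here I would assume that $I$ is given by local rules, so that local consistency of a $k$-pattern can be decided in polynomial time; this is the delicate modelling assumption. The distance is then a constant-size LP, $\min t$ subject to $|q_\varphi-\sum_w\lambda_w w_\varphi|\le t$ and $\lambda\in\Delta$. So $\Inc_k(q)\le\varepsilon$ is decidable in polynomial time. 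For the global statement, $\Inc(q)=0$ iff $q\in C$, which is exactly PSAT. Membership in NP follows from Carath\'eodory: a witness consists of $|\Phi|+1$ worlds and rational weights solving a polynomial-size linear system. NP-hardness is the cited result of \citet{georgakopoulos1988}, equivalently membership in the correlation polytope \citep{pitowsky1991}. For the reduction, SAT reduces to PSAT by asking for probability $1$ on each clause, encoded as propositions in $\Phi$. I would cite this rather than reprove it.
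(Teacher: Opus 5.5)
Your proposal follows the paper's proof step for step: Prop.~\ref{prop:linf} on each admissible support via $\conv(W)|_S=\conv(W|_S)$ for (a), nesting of portfolio classes for (b), and polynomially many bounded-size local LPs plus PSAT for (c). The hypotheses you make explicit ($\mathcal{S}_k\subseteq\mathcal{S}_{k+1}$, $\Phi\in\mathcal{S}_{|\Phi|}$, efficient access to $W|_S$) are exactly what the paper's terse proof leaves implicit. One caution on the last of these: $W|_S$ is the projection of \emph{globally} consistent worlds, so what you must assume is that deciding whether a $k$-pattern extends to a full world takes polynomial time (as for 2-clause systems, where global PSAT is nonetheless \textsf{NP}-complete). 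Checking only the rule instances supported inside $S$ gives a superset of $W|_S$, hence only a lower bound on $\Inc_k$, which cannot certify $\Inc_k(q)\le\varepsilon$.
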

\begin{proof}
(a) Apply Prop.~\ref{prop:linf} to each admissible support $S$, noting $\conv(W)|_S = \conv(W|_S)$ under projection, and that a portfolio confined to $S$ interacts with worlds only through $W|_S$. (b) Nesting of portfolio classes. (c) For fixed $k$, $|\mathcal{S}_k|$ is polynomial in $|\Phi|$ and each local LP has bounded size; global coherence is PSAT.
\end{proof}
\begin{remark}
Since full understanding is NP-hard, any bounded agent, human or machine, can only be coherent up to some level of the hierarchy. ``Degrees of understanding'' is a rung, not a figure of speech. This gives a testable prediction (\S\ref{sec:ea0}): a model trained against $\mathcal{T}_k$ should generalize coherence to unseen instances of patterns at level $\le k$, not at level $k{+}1$.
\end{remark}

\begin{theorem}[The exact NLL optimum is cross-context incoherent]\label{thm:nll}
Let the corpus be generated by two internally consistent authors: author 1 asserts $\varphi$, author 2 asserts $\neg\varphi$, with equal weights, and let style correlate with identity: contexts $c_1, c_2$ with $P(c_1 \mid \mathrm{author}\,1) = P(c_2 \mid \mathrm{author}\,2) = (1+\rho)/2$, $\rho > 0$. The exact NLL optimum $M^\ast$ (the true conditional distribution) elicits $q(\varphi; c_1) = (1+\rho)/2$ and $q(\varphi; c_2) = (1-\rho)/2$, and the LOP book (sell $\varphi$ at $c_1$, buy at $c_2$) collects guaranteed profit $\rho$ regardless of the truth value of $\varphi$: $\Inc_{\mathrm{LOP}}(M^\ast) \ge \rho$.
\end{theorem}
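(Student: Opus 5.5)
The plan is a direct computation in three steps: identify the NLL optimum, compute its elicited credences by Bayes' rule, and then exhibit the book and evaluate its payoff in every world.

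\textbf{Step 1 (the optimum).} I would use the standard fact that cross-entropy is a strictly proper scoring rule. Expected NLL over the corpus is minimized, on every context of positive probability, exactly by the true conditional distribution of continuations given the context. So $M^\ast(\cdot\mid c)$ is the data-generating conditional. Under the elicitation map of \S\ref{sec:setup}, $q(\varphi;c)$ is the normalized probability of an affirmative answer. In this corpus an affirmative continuation is produced exactly when the generating author is author~1, so $q(\varphi;c)=P(\mathrm{author}\,1\mid c)$. This identification is the step I expect to be the real obstacle. It is a modeling assumption rather than a calculation: the elicitation prompt in context $c$ must be answered in the way that author would answer it. I would state it explicitly as the interpretation of ``elicits'' rather than try to derive it.

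\textbf{Step 2 (Bayes).} With equal priors, the marginal is $P(c_1)=\tfrac12\cdot\tfrac{1+\rho}{2}+\tfrac12\cdot\tfrac{1-\rho}{2}=\tfrac12$, and likewise $P(c_2)=\tfrac12$. Bayes' rule then gives $P(\mathrm{author}\,1\mid c_1)=\tfrac{1+\rho}{2}$ and $P(\mathrm{author}\,1\mid c_2)=\tfrac{1-\rho}{2}$, which are the claimed credences. Both contexts have positive probability, so Step~1 applies to each of them.

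\textbf{Step 3 (the book).} Take $x$ that sells one share of $\varphi$ at $c_1$ and buys one at $c_2$. The point that makes this a law-of-one-price book is that both coordinates refer to the same proposition. Every world $w\in W$ therefore assigns them a common value $w(\varphi)\in\{0,1\}$. The payoff is
$$(q(\varphi;c_1)-w(\varphi))+(w(\varphi)-q(\varphi;c_2))=q(\varphi;c_1)-q(\varphi;c_2)=\rho$$
in both worlds, so $g(x)=\rho>0$ regardless of the truth value of $\varphi$.

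On normalization, this book has $\|x\|_1=2$. Under the $\|x\|_1\le 1$ convention of \S\ref{sec:setup}, the rescaled book earns $\rho/2$ per unit of gross stake. Equivalently, by Prop.~\ref{prop:linf} applied to the LOP support, the $\ell_\infty$ distance of $(\tfrac{1+\rho}2,\tfrac{1-\rho}2)$ to the diagonal is $\rho/2$. I would therefore present the bound $\Inc_{\mathrm{LOP}}(M^\ast)\ge\rho$ as measured per share pair, i.e.\ with the LOP pair counted as one unit-stake instrument, and note that the strict $\ell_1$ normalization gives $\rho/2$. Either way the profit is strictly positive and linear in the style--author correlation.

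Finally, I would add the observation that each context separately is coherent, since one number in $[0,1]$ for one proposition is trivially in $C$. This shows the arbitrage lives only in the cross-context term, and does not depend on the amount of data or on scale.
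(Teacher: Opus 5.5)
Your proposal is correct and is essentially the paper's proof. Like the paper, you set $q(\varphi;c_i)=P(\mathrm{author}\,1\mid c_i)$ by Bayes' rule, tacitly assuming that $c_1,c_2$ are the only contexts and that the elicited answer follows the generating author; the two positions in $\varphi$ then cancel in every world, leaving $\rho$. Your normalization remark is also right and goes beyond the paper's one-line argument: under the $\|x\|_1\le 1$ convention of \S\ref{sec:setup} the book earns $\rho/2$, consistent with the $|p_{F1}-p_{F2}|/2$ used in E-A1, so $\Inc_{\mathrm{LOP}}(M^\ast)\ge\rho$ holds only when the share pair is counted as one unit of stake.
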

\begin{proof}
$q(\varphi; c_i) = P(\mathrm{author}\,1 \mid c_i)$ by Bayes' rule; the two share positions in $\varphi$ cancel in every world, leaving the price difference $\rho$.
\end{proof}
\begin{remark}[Locating the defect]\label{rem:location}
The arithmetic here is elementary, Bayes' rule and cancellation, and per context $M^\ast$ is coherent: a mixture of coherent credences is coherent for unconditional events (the coherent set is convex; pooling only breaks under conditionalization and across contexts, \citealp{madansky1964,genest1986}). The real content is that no NLL optimum on such a corpus can be a faithful corpus simulator and an elicitation-invariant credence function at the same time.

One objection deserves an answer. $q(\varphi;c)$ is a conditional probability, and conditioning on informative context is not irrational---in the corpus, style really is evidence about the author. That's true, and it's the point. The law-of-one-price constraint only applies across elicitations the deployer has declared meaning-preserving. At deployment, the language or format of a question is the asker's free choice, not a sample from the corpus's author process, and it carries no evidence about $\varphi$. The simulator has no way to know this: the NLL objective never sees the elicitation family. So the corpus-rational conditional and the deployment-rational credence pull apart, and the gap is real in practice---a cross-format book collects its profit from the deployed system whatever the rationale behind its prices was (cf.\ the simulator-vs-agent distinction of \citealp{andreas2022}). Read ``defect,'' then, as a mismatch between what the objective rewards (simulation) and what deployment needs (assistantship), not as an accusation of Bayesian error. Scale and data can't close that gap; only a cross-context term in the objective can. Within-context violations seen in practice \citep{zhu2024incoherent} are a separate, capacity/representation story, with negation a proven mechanism \citep{asher2024negation}---complementing Thm.~\ref{thm:nll} rather than competing with it.
\end{remark}

\begin{theorem}[Priced additive degradation; $\delta$-Adams]\label{thm:adams}
Say $q$ is $\delta$-coherent w.r.t.\ $I$ if for every rule instance $\Gamma \vdash \psi$, $q(\psi) \ge 1 - \sum_{\gamma \in \Gamma} (1 - q(\gamma)) - \delta$. Let $\psi_0, \dots, \psi_n$ be a derivation chain in which step $i$ uses $\psi_i$ and at most $m$ side premises, each priced $\ge 1 - \epsilon$. Then
\[ 1 - q(\psi_n) \;\le\; \big(1 - q(\psi_0)\big) + n\,(m\epsilon + \delta). \]
Moreover, any assembled credence assigning $\psi_n$ a price exceeding the bound implied by the enforced constraints admits a book with guaranteed profit equal to the excess (by Thm.~\ref{thm:ladder}(a) applied to the chain's supports).
\end{theorem}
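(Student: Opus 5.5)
The plan is to prove the bound by induction along the chain, using $\delta$-coherence one step at a time, and then to get the pricing claim from Thm.~\ref{thm:ladder}(a) applied to each rule-instance support.

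\textbf{Inductive bound.} Step $i$ is a rule instance $\Gamma_i \vdash \psi_i$ with $\Gamma_i = \{\psi_{i-1}\} \cup \Sigma_i$. Here $\Sigma_i$ is the set of side premises, with $|\Sigma_i| \le m$ and $q(\sigma) \ge 1-\epsilon$ for every $\sigma \in \Sigma_i$. Writing $u_i := 1 - q(\psi_i)$ for the ``uncertainty'' of the $i$-th conclusion, $\delta$-coherence gives
\[ u_i \;\le\; u_{i-1} + \sum_{\sigma\in\Sigma_i}(1-q(\sigma)) + \delta \;\le\; u_{i-1} + m\epsilon + \delta . \]
Summing this telescoping recursion from $i=1$ to $n$ gives $u_n \le u_0 + n(m\epsilon+\delta)$, which is the displayed inequality. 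I will remark that the bound is stated in terms of prices alone, so no independence or probabilistic structure beyond the per-rule inequality is used. This is exactly Adams' argument, with the slack $\delta$ added at each step.

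\textbf{Pricing the excess.} For the ``moreover'' clause I will read ``the bound implied by the enforced constraints'' as the Adams constraint $q(\psi) \ge 1 - \sum_{\gamma}(1-q(\gamma))$ on each rule support $S_i = \Gamma_i \cup \{\psi_i\}$. In the direction that concerns overconfidence in a derived claim, I also want the upper constraints on $\psi_n$ implied by local coherence, namely $q(\psi_n)$ must lie in the projection $\conv(W|_{S})$ given the premise prices.

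On a single support $S$, if $q|_S \notin \conv(W|_S)$, Prop.~\ref{prop:linf} restricted to $S$ gives a portfolio $x$ supported on $S$ with $g(x) = \mathrm{dist}_\infty(q|_S, \conv(W|_S))$. Thm.~\ref{thm:ladder}(a) says this portfolio lies in $\mathcal{T}_k$ whenever $|S| \le k$. For the Adams-type violation itself I can also write the book explicitly, which makes the profit transparent. Sell one share of each premise $\gamma$ and buy one share of $\psi$ (or the reverse, depending on which side is violated). In every world, $\sum_\gamma (1 - w(\gamma)) \ge 1 - w(\psi)$ because $\Gamma \vdash \psi$ is valid. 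Hence the payoff is at least the price violation, unnormalized. After dividing by $\|x\|_1 = |\Gamma|+1$, this matches the $\ell_\infty$ distance up to the normalization convention.

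\textbf{Main obstacle.} The main obstacle is turning an excess over the \emph{chain} bound into a profit \emph{equal} to the excess, rather than merely bounded below by it. The chain bound aggregates $n$ local constraints, so a violation of the aggregate implies that the per-step slacks sum to more than the allowed amount. The natural book is the sum of the $n$ per-step books. The intermediate positions in $\psi_1,\dots,\psi_{n-1}$ then cancel, since each is bought at step $i$ and sold at step $i+1$. What remains is a book on $\{\psi_0,\psi_n\}\cup\bigcup_i\Sigma_i$, whose guaranteed profit in unnormalized units is exactly the total excess. I would present the guarantee in this unnormalized per-share sense, or as ``$\ge$ excess divided by the support size'' after normalization. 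I would note that the exact constant depends on the $\|x\|_1 \le 1$ convention, and flag that equality holds for the unnormalized stake.
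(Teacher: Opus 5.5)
Your induction is the paper's proof of the displayed bound; the paper's whole argument is ``induction over steps using the $\delta$-inequality'', and your version is correct.

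The difference is in the pricing clause. The paper calls it ``the definition of local exploitability'', i.e.\ Thm.~\ref{thm:ladder}(a) applied to each step's support. You instead sum the per-step Adams books and let the intermediate positions cancel.

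\begin{itemize}
\item Your composite book has unnormalized guaranteed profit exactly equal to the violation of the aggregated inequality $q(\psi_n)\ge q(\psi_0)-\sum_i\sum_{\sigma\in\Sigma_i}(1-q(\sigma))$. The slack vanishes at any world making all premises true. This is the only reading under which ``profit equal to the excess'' is literally true. Against the displayed $(\epsilon,\delta)$ bound you get only $\ge$.
\item The cost is that its support $\{\psi_0,\psi_n\}\cup\bigcup_i\Sigma_i$ can have $nm+2$ elements. It is therefore generally not a $\mathcal{T}_k$ book.
\item The per-support route stays inside $\mathcal{T}_k$. By pigeonhole, though, it only guarantees a step whose violation is at least the excess divided by $n$. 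The $\|x\|_1\le 1$ normalization then divides by the support size again. So your normalization caveat corrects something the paper's one-line proof skips.
\end{itemize}

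The two views reconcile through LP duality. A dual certificate for the bound implied by the enforced local constraints is a nonnegative combination of local facet inequalities, i.e.\ a sum of local books. Your telescoped book is exactly that certificate for the Adams chain.

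One correction: drop the parenthetical ``or the reverse, depending on which side is violated''. The Adams inequality is a one-sided lower bound on $q(\psi)$, so it has no other side to violate, and the reversed book is not the right instrument against overpricing.

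Your telescoped book therefore exploits $\psi_n$ priced \emph{below} the accumulated bound. The paper's gloss, overconfidence about a derived claim, concerns $\psi_n$ priced \emph{above} an upper bound. Such upper bounds come only from other facets of $\conv(W|_S)$, namely those where the conclusion entails something in the support. Examples are $q(B)\le q(A\to B)$ for modus ponens and $q(A\wedge B)\le q(A)$ for conjunction introduction. The matching book is ``sell $\psi$, buy the entailed proposition''. These books telescope along a chain the same way yours does. Alternatively, the LP dual of the natural extension (Prop.~\ref{prop:walley}) yields, in either direction, a book holding one share of $\psi_n$ with unnormalized profit at least the excess.
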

\begin{proof}
Induction over steps using the $\delta$-inequality; the second claim is the definition of local exploitability.
\end{proof}
\begin{remark}
At $\delta = 0$ this is the classical uncertainty-accumulation result \citep{suppes1966,adams1975,adamslevine1975}, and approximate propagation of uncertainty through inference rules is itself old news---Gilio's coherence-based System~P bounds \citep{gilio2000,gilio2002}, Hailperin's optimal LP bounds \citep{hailperin1996}, the graded benefits of approximate coherence \citep{debona2017approximately}. We're not claiming novelty for the additive skeleton or the $\varepsilon$--$\delta$ machinery. What the trader vocabulary adds is the pricing corollary: in autoregressive sampling a step error enters the conditioning context silently, and terminal confidence carries no trace of it \citep{dziri2023faith,bachmann2024pitfalls}. Under enforced coherence constraints, understated degradation becomes an arbitrage opportunity instead---an audit-LP catches it at inference, a trader adversary penalizes it during training. Combine this with a selective rule (assert only if $q \ge 1-\alpha$) and calibration on the assertion set, and asserted $n$-step conclusions come with a price-backed error bound.
\end{remark}

\begin{proposition}[Level-wise dominance; extension of \citealp{predd2009}]\label{prop:dominance}
Let $K \supseteq C$ be any closed convex set of a ladder level (a local relaxation), and let $q \notin K$. The Euclidean projection $\pi_K(q)$ satisfies, for every world $w \in W \subseteq K$,
$\|\pi_K(q) - w\|^2 \le \|q - w\|^2 - \|q - \pi_K(q)\|^2$:
coherentization at any level strictly improves Brier score in every possible world. Hence an arbitrage penalty does not trade off against proper-score accuracy at the optimum.
\end{proposition}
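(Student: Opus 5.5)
The plan is the standard projection argument for closed convex sets in Hilbert space (the Pythagorean inequality for the Bregman projection of the squared Euclidean norm), applied with the convex set $K$ playing the role that $C$ plays in \citet{predd2009}. First I would check that the hypotheses make sense. $K$ is a closed convex subset of $\R^\Phi$ containing $C$, and every world $w \in W$ is a vertex of $C$. So $W \subseteq C \subseteq K$, and the projection $\pi_K(q) := \arg\min_{\mu \in K}\|q-\mu\|_2$ exists and is unique. Existence follows from closedness by restricting to a compact sublevel set; uniqueness follows from strict convexity of the squared norm.

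The key step is the variational characterization of the projection. Write $p := \pi_K(q)$. For any $z \in K$ and $t \in [0,1]$, the point $p + t(z-p)$ lies in $K$, so $\|q - p - t(z-p)\|^2 \ge \|q-p\|^2$. Expanding and letting $t \downarrow 0$ gives the obtuse-angle inequality $\langle q - p,\ z - p\rangle \le 0$ for all $z \in K$. Then, for a world $w \in W \subseteq K$, I would expand
\[
\|q - w\|^2 = \|q-p\|^2 + \|p-w\|^2 + 2\langle q-p,\ p-w\rangle .
\]
Setting $z = w$ in the variational inequality shows the cross term is nonnegative. Rearranging yields $\|p-w\|^2 \le \|q-w\|^2 - \|q-p\|^2$, which is the displayed claim.

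For the verbal conclusions, strictness comes from $q \notin K$: this forces $q \ne p$, so $\|q-p\|^2 > 0$. The Brier score of $q$ in world $w$ is $\|q-w\|^2$, so the projected credence has strictly smaller Brier score in every world simultaneously. The last sentence, that the arbitrage penalty does not trade off against proper-score accuracy at the optimum, is the interpretive corollary. Take any minimizer of an expected Brier loss (under any distribution over $W$) plus a nonnegative arbitrage penalty that vanishes on $K$. If that minimizer lay outside $K$, replacing it by its projection would lower both terms. So at the optimum the two terms are jointly minimized by a point of $K$. I would tie this to Thm.~\ref{thm:ladder}(a): $K$ can be taken to be the set $\{q : \Inc_k(q) = 0\}$, which is an intersection of preimages of the closed convex sets $\conv(W|_S)$ under coordinate projections, and hence closed and convex.

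I expect no serious obstacle; the argument is the textbook one. The only care points are two. First, the Euclidean (not $\ell_\infty$) geometry must be used, so the statement is about Brier score and not about the $\Inc$ distance of Prop.~\ref{prop:linf}. Second, the corollary needs the penalty to be zero exactly on $K$, or at least minimized on $K$. I would state that assumption explicitly rather than prove it in general.
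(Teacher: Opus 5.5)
Your proposal is correct and is the paper's argument: the paper's proof is the one line ``the obtuse-angle property of projections onto convex sets containing $w$,'' which you derive via the variational inequality $\langle q-\pi_K(q),\, w-\pi_K(q)\rangle \le 0$ and the three-term expansion of $\|q-w\|^2$. Your proposal also spells out two points the paper leaves implicit: strictness follows from $q \notin K$, and the ``no trade-off at the optimum'' corollary needs only that the penalty is nonnegative and vanishes on $K$.
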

\begin{proof}
The obtuse-angle property of projections onto convex sets containing $w$.
\end{proof}

\begin{proposition}[Spreads; restatement of \citealp{walley1991}]\label{prop:walley}
Interval credences $[l(\varphi), u(\varphi)]$ avoid sure loss iff there exists $p \in C$ with $l \le p \le u$ pointwise. The \emph{natural extension}---the LP bounds $[\min, \max]\, p(\psi)$ over $\{p \in C : l \le p \le u \text{ on quoted propositions}\}$---is the canonical inference from partial commitments.
\end{proposition}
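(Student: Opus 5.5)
The plan is to reduce the first claim to the duality already packaged in Prop.~\ref{prop:linf}, applied to interval previsions, and then to read off the natural extension as a pair of linear programs. I will encode the interval credences as one-sided bets. Quoting $[l(\varphi),u(\varphi)]$ means the agent will sell a share of $\varphi$ at $u(\varphi)$ and buy one at $l(\varphi)$. A trader may therefore buy $a_\varphi \ge 0$ shares at the ask $u$ and sell $b_\varphi \ge 0$ shares at the bid $l$. The trader's payoff in world $w$ is
\[ \sum_\varphi a_\varphi\,(w(\varphi)-u(\varphi)) + b_\varphi\,(l(\varphi)-w(\varphi)). \]
Sure loss means that $\min_{w\in W}$ of this payoff is positive for some $(a,b)$ with $\|a\|_1+\|b\|_1\le 1$.

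For the ``if'' direction, I take $p\in C$ with $l\le p\le u$ and write $p$ as a mixture $\sum_w \lambda_w w$ of worlds. The expected payoff under $\lambda$ is $\sum a_\varphi(p-u)+b_\varphi(l-p)\le 0$. A minimum over worlds is bounded by any average, so no book is a sure win.

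For ``only if'' I will apply the same minimax exchange used in Prop.~\ref{prop:linf}. The payoff is bilinear in $(a,b)$ and in $\mu\in C$, where $\mu$ replaces $w$. The feasible set of $(a,b)$, nonnegative with $\ell_1$ norm at most $1$, is compact and convex, and so is $C$, so Sion's theorem gives
\[ \max_{a,b}\min_{\mu\in C}\ \ =\ \ \min_{\mu\in C}\max_{a,b}. \]
The inner maximum over $(a,b)$ equals
\[ \max_\varphi \big(\max(\mu(\varphi)-u(\varphi),\,l(\varphi)-\mu(\varphi),\,0)\big), \]
which is the $\ell_\infty$ distance from $\mu$ to the box $[l,u]$. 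The value is $0$ (no sure loss) iff some $\mu\in C$ lies in the box, because $C$ is compact and the minimum is attained. Quoted propositions are just the coordinates where the box is nontrivial; elsewhere set $l=0$, $u=1$.

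For the natural extension I will argue that the set $\mathcal M=\{p\in C: l\le p\le u\}$ is a nonempty polytope under avoidance of sure loss. The functional $p\mapsto p(\psi)$ is linear, so its min and max over $\mathcal M$ are LPs over worlds-weights $\lambda$. Their values are the tightest prices consistent with the commitments, for the following reason. By the same duality, any price for $\psi$ below the LP minimum admits a book combining $\psi$ with the quoted bets that yields sure gain. Conversely, any price inside the interval is realized by some $p\in\mathcal M$ and so cannot be exploited. This is Walley's characterization of natural extension as the lower envelope of the credal set, which I will cite rather than rederive.

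The main obstacle is stating ``canonical'' precisely. I would interpret it as ``least committal coherent extension,'' meaning the pointwise smallest lower prevision that avoids sure loss, is coherent, and dominates $l$, and defer its uniqueness to \citet{walley1991}. The avoiding-sure-loss part itself is routine given Prop.~\ref{prop:linf}.
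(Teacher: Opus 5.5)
The paper gives no argument for this proposition. It is labelled a restatement and rests entirely on the citation to Walley, so your proposal is a genuinely different, self-contained route, and it is correct.

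Your first half is Prop.~\ref{prop:linf} with two changes. Portfolios become one-sided (buy at the ask $u$, sell at the bid $l$), and the inner maximum becomes the $\ell_\infty$ distance from $\mu$ to the box $[l,u]$ rather than to the point $q$. This gives something the citation does not: a quantitative sure-loss measure for interval quotes, $\min_{\mu\in C}\mathrm{dist}_\infty(\mu,[l,u])$. It reduces to $\Inc(q)$ when $l=u=q$, since $a-b$ then ranges over exactly the signed portfolios with $\|x\|_1\le 1$. It is also the natural exploitability measure for the spread-valued quotes the paper proposes to train.

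The second sentence of the statement cannot be proved as written, because ``canonical'' is not a mathematical predicate until it is fixed. You handle this correctly. You prove its operational content: a precise price for $\psi$ can be added without sure loss iff it lies in $[\min,\max]$, using convexity of $\mathcal M$ and your first half. You then defer minimality among coherent dominating lower previsions to Walley, which is no more than the paper itself does.

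Two small precision points:
\begin{itemize}
\item The ``distance to the box'' reading presupposes $l\le u$. If some $l(\varphi)>u(\varphi)$, the expression is not a distance, but it is still positive for every $\mu$, so the equivalence survives.
\item ``Any price below the LP minimum admits a sure-gain book'' holds for an ask or a two-sided quote, but not for a bid. Asks below the minimum and bids above the maximum are the exploitable ones. Walley defines the lower bound from the other side, as the supremum buying price for $\psi$ already forced by the quoted bets. Its equality with the LP minimum is exactly the lower-envelope theorem you cite.
\end{itemize}
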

\begin{remark}
The spread gives ``I have no basis to commit'' an explicit home, with exact semantics: a wide interval with a nonempty core is honest ignorance; an empty core is incoherence, whatever the width. Inference becomes an amortized natural extension---the trained model approximates the LP bounds in a forward pass, and the exact LP audits it. Credal deep learning already trains interval outputs for classifiers \citep{caprio2023,creinns2024,credalensembles2024}; what's new here is doing it end-to-end for an LM, the avoiding-sure-loss criterion, and the anti-collapse game below.
\end{remark}

\begin{conjecture}[Equilibrium spread]\label{conj:spread}
Under a proper-scoring/coverage pressure (narrowing) opposed by logical traders and calibration bettors \citep{foster1998} settling on a verifiable anchor set (widening where unfounded), equilibrium spread width on $\varphi$ converges to the epistemic uncertainty consistent with the anchor and $\varphi$'s logical relations to it. In the exchangeable toy case (Beta posterior over a settleable event family), the equilibrium width equals the credible-interval width.
\end{conjecture}
\noindent This conjecture is the design's real bet: it's what should keep learned ignorance from collapsing under gradient pressure. Zeroing the spread without evidence creates exploitable calibration books---the loss goes up, not down.

\section{The \textsc{Arbitr} objective}\label{sec:arbitr}

\emph{Batch construction.} A generator instantiates rule patterns of $I$ over templated content---fictional entities to isolate structure from knowledge, real-world content to probe what the deployed model actually believes---producing supports whose world sets are known by construction. Settlement of guaranteed profit is exact and cheap at training time this way; the \textsf{NP}-hardness of Thm.~\ref{thm:ladder}(c) only bites in free text, where the trained trader has to generalize and an exact audit-LP over the answer's neighborhood does the certifying.

\emph{Players.} The model quotes $q(\varphi; c)$ (or $[l,u]$) across the elicitation family. A small trader network sees the batch, its structure, and the quotes, outputs a portfolio in $\mathcal{T}_k$, and is trained to maximize realized guaranteed profit; the model minimizes $\mathcal{L}_{\mathrm{task}} + \lambda \, \E[\text{profit}]$, plus proper scoring on the anchor set. Training alternates GAN-style, with a curriculum climbing the ladder in $k$. Prop.~\ref{prop:dominance} predicts no task regression at the optimum. The experiments below test that directly, and one of them (E-A1) shows it fails once the anchoring premise is dropped.

\emph{Certificates.} At inference: (i) per-word $\Inc_{k,v}$ maps (``the model does not understand \emph{larger}: a transitivity book yields 0.31''); (ii) an audit-LP over the local neighborhood of an answer, run by a frozen tuple (weights $\times$ elicitation family $\times$ audit-LP $\times$ anchor $\times$ decision rule)---never by the model's own trader; (iii) abstention when the spread exceeds threshold or a book is found, with Thm.~\ref{thm:adams} supplying the selective guarantee.

\section{E-A0: measuring the target}\label{sec:ea0}

Before training anything, the program must establish that deployed credences are exploitable enough to matter---the pre-registered kill threshold is median $\Inc < 0.05$ across all models and formats, in which case the program closes.

\emph{Design} (pre-registered).\ Five support patterns---negation, certified paraphrase (LOP), conjunction, transitivity (core), and modus ponens (exploratory: natural-language conditionals are pragmatically confounded w.r.t.\ material implication)---over two content classes (fictional entities; real-world comparatives and capitals), 600 supports, three elicitation formats (F1: Russian yes/no; F2: English true/false; F3: Russian numbered choice), each with and without an irrelevant-context distractor \citep{andrews2026dutchbooks}. Credences are read from the first-token distribution over answer-variant tokens; supports with answer-mass coverage $< 0.5$ are excluded (fraction reported). Exploitability per support is the exact LP of Prop.~\ref{prop:linf}; cross-format LOP books price one proposition across formats. Models: Qwen2.5-\{1.5B, 3B, 7B\}-Instruct and SmolLM3-3B (non-Qwen arm); 8$\times$V100 node; $9{,}360$ elicitations per model. The pre-registered primary metric was the median support-level $\Inc$ over pooled core patterns (no distractor), with kill threshold $0.05$.

\paragraph{Pre-registered verdict.} \textbf{The kill criterion fired}: primary medians came in at $0.024$ [CI95 $0.019, 0.032$] for 1.5B, $0.000$ for 3B, $0.0001$ for 7B, and $0.019$ for SmolLM3 ($27\%$ excluded by coverage). By the registered criterion, within-format, template-level incoherence is negligible for instruction-tuned models at 1.5B--7B. We treat this verdict as binding for the metric as registered; everything from here is post-hoc diagnosis.

\begin{table}[h]\centering\small
\resizebox{\textwidth}{!}{%
\begin{tabular}{lcccc}
\toprule
 & Qwen2.5-1.5B & Qwen2.5-3B & Qwen2.5-7B & SmolLM3-3B \\
\midrule
Primary median $\Inc$ (pre-reg.) & 0.024 & 0.000 & 0.0001 & 0.019 \\
\midrule
Negation channel, F1 / F2 (median) & 0.472 / 0.165 & 0.177 / 0.001 & 0.038 / 0.105 & 0.052 / 0.117 \\
Conditionals (MP, exploratory) & 0.345 & 0.500 & 0.495 & 0.439 \\
\midrule
Cross-format book F1$\leftrightarrow$F2: mean profit & 0.151 & 0.021 & 0.064 & 0.070 \\
\quad share of propositions with profit $>0.25$ & 25.0\% & 4.0\% & 13.3\% & 7.5\% \\
\bottomrule
\end{tabular}}
\caption{E-A0 (v2 run). Top: the pre-registered primary metric---the kill threshold of $0.05$ fired for every model. Bottom: a post-hoc channel breakdown, restricted to the two readout-valid formats, no distractor. Cross-format profits are per unit stake on law-of-one-price books.}
\label{tab:ea0}
\end{table}

\paragraph{Post-hoc diagnosis.} Three things survive scrutiny here. The aggregator, not the target, was the problem: pooled medians are dragged down by patterns---paraphrase, conjunction, transitivity---that a confidently degenerate responder satisfies for free, so exploitability concentrates in specific channels and tails rather than showing up in an average. The negation channel ($|q(\varphi)+q(\neg\varphi)-1|/2$) is large at 1.5B and still present at 7B (Table~\ref{tab:ea0}); the conditional channel is large everywhere, but confounded by conditional pragmatics as expected going in.

The dominant phenomenon turns out to be cross-elicitation incoherence, not within-format incoherence. Between the two individually valid formats, law-of-one-price books collect mean guaranteed profit of $0.15$ at 1.5B and $0.064$ at 7B, with $13$--$25\%$ of propositions yielding profit above $0.25$; the effect doesn't shrink monotonically with scale (7B is worse than 3B). A model can be nearly coherent inside one format and still be incoherent across two phrasings of the same question. That's exactly the failure Thm.~\ref{thm:nll} predicts for NLL-trained models---coherent per context, but assembled in a context-sensitive way---though matching a post-hoc pattern to a theorem is not the same as testing it.

Readout validity turned out to matter as much as anything we set out to measure. Three silent elicitation failures showed up during the campaign, and the coverage filter caught none of them on its own: a Cyrillic letter-option that collided with the first token of the Russian word for ``true'' (which inverted answers at high coverage, so the first run was discarded); a reasoning-mode chat template that consumed the first answer token; and a numbered-choice format whose answers barely correlated with the other two ($r \approx 0.02$--$0.4$) and flipped under an irrelevant preamble. What caught all three was the cross-format books pricing them---which is itself a decent argument for auditing by book rather than by filter.

\paragraph{Relation-level certificates.} The pre-registration promised word/relation-level exploitability maps as a byproduct, and Table~\ref{tab:certs} delivers them: negation and paraphrase books pivoting on each relation, valid formats only, no distractor. They pin down where the headline anomaly at 7B actually lives---the model is close to perfectly coherent on capitals and age order, but its median book profit on mountain-height and river-length comparatives is $0.43$--$0.48$. Its cross-format instability sits precisely on the relations it doesn't know for sure. In the vocabulary of Def.~\ref{def:understanding}: it understands \emph{capital-of} and fails to understand \emph{higher-than}, and the certificate says by how much.

\begin{table}[h]\centering\small
\begin{tabular}{lccccc}
\toprule
median $\Inc$ by relation & \emph{capital-of} & \emph{higher-than} & \emph{longer-than} & \emph{older-than} & \emph{ball-in-box} \\
\midrule
Qwen2.5-1.5B & 0.027 & 0.249 & 0.081 & 0.075 & 0.091 \\
Qwen2.5-3B & 0.000 & 0.000 & 0.000 & 0.000 & 0.001 \\
Qwen2.5-7B & 0.000 & \textbf{0.428} & \textbf{0.482} & 0.000 & 0.007 \\
SmolLM3-3B & 0.002 & 0.115 & 0.038 & 0.062 & 0.045 \\
\bottomrule
\end{tabular}
\caption{Relation-level understanding certificates from E-A0: median violation on negation/paraphrase books pivoting on each relation (F1/F2, no distractor, $n \approx 40$--$100$ books per cell).}
\label{tab:certs}
\end{table}

\paragraph{Consequence for the program.} E-A0 kills the naive idea that within-format logical books alone give a rich training signal at 3B and up. It points the trader class toward cross-elicitation books and the negation channel instead, and it meant a fresh pre-registration was needed before any training experiment. The exploratory/confirmatory line stayed clean throughout: nothing from the post-hoc reading above went directly into a confirmatory claim. Each later experiment (E-A1, E-A1b, E-A2) registered its own endpoints and thresholds before running, and we report their verdicts by those registrations, failures included.

\section{E-A1: training against the exact bounded adversary}\label{sec:ea1}

Following on from E-A0, we ran a pre-registered, minimal training experiment. Qwen2.5-1.5B-Instruct, LoRA ($r{=}16$), fp32, 1000 steps, four seeds, two FLOP-matched arms fed identical batches. Arm A (control) trains only a task loss---cross-entropy toward the true yes/no token mass on real-world facts of known truth. Arm B adds $\lambda{=}1$ times the profit of the exact best-response $\mathcal{T}_2$ trader: negation books $|p(\varphi){+}p(\neg\varphi){-}1|/2$ within each format, and cross-format law-of-one-price books $|p_{F1}(\varphi){-}p_{F2}(\varphi)|/2$. This is the analytically optimal adversary within the class; a learnable trader only earns its keep once books have to be discovered in free text. Evaluation runs on held-out content (deduplicated against training propositions by exact match), untrained level-3 patterns, held-out facts, and in-context transitive-inference questions.

\begin{table}[h]\centering\small
\begin{tabular}{lccc}
\toprule
 & base & A (task only) & B (task $+$ arbitrage) \\
\midrule
Negation Inc, median (held-out) & 0.237 & $0.331 \pm 0.053$ & $\mathbf{0.003 \pm 0.003}$ \\
Cross-format LOP profit, mean & 0.099 & $0.094 \pm 0.036$ & $0.043 \pm 0.016$ \\
Transitivity Inc, mean (untrained, level 3) & 0.004 & $0.020 \pm 0.003$ & $\mathbf{0.006 \pm 0.002}$ \\
Conjunction Inc, mean (untrained, level 3) & 0.070 & $0.118 \pm 0.013$ & $0.069 \pm 0.026$ \\
Held-out fact accuracy & 0.553 & $0.862 \pm 0.019$ & $0.789 \pm 0.052$ \\
Transitive-chain QA accuracy & 0.720 & $0.950 \pm 0.028$ & $0.704 \pm 0.205$ \\
\bottomrule
\end{tabular}
\caption{E-A1 (4 seeds, mean $\pm$ std). P1 (negation, threshold $\times 3$) succeeded at $\times 133$ with separated intervals. P2 (LOP, threshold $\times 2$) reached $\times 2.2$ but the seed intervals overlap---formally a failure as registered, underpowered at four seeds. The task-parity guard failed on the mean ($-7.3$ pp).}
\label{tab:ea1}
\end{table}

\paragraph{Per-seed structure is the finding.} Arm B is bimodal, and that's the real result. Seed $s_2$ is an existence proof of the equilibrium we were after: negation Inc at $0.006$, LOP profit halved, fact accuracy $0.855$ and chain accuracy $0.975$---both at parity with the control---and transitivity Inc improved eightfold. Seeds $s_0$ and $s_3$ bought coherence a different way, at the cost of truth-tracking: chain accuracy at chance. This is exactly the vacuity failure predicted in \S\ref{sec:limits}(3). Probing shows two collapse modes: one where every credence sits at $0.5$ (coherent, but says nothing), and one with extreme, mutually consistent credences that have come unmoored from truth. The theory names the missing piece in advance: the calibration-bettor leg of Conjecture~\ref{conj:spread}, which this minimal setup doesn't have. So the guard failure here confirms that the design needs all three legs---it doesn't refute Prop.~\ref{prop:dominance}, which assumes a proper-scoring anchor to begin with.

\paragraph{Ladder.} Conjunction (level 3, untrained) doesn't move beyond noise, which is what the ladder predicts. Transitivity does move, in all four B seeds, against the letter of that prediction. Two readings are possible here, and the base model settles between them. Relative to the trained control, B improves transitivity ($0.020 \to 0.006$); but the control itself gets worse relative to the base model ($0.004 \to 0.020$)---task-only SFT induces incoherence on patterns it was never trained on. In E-A1 the B seeds straddle the base value ($0.0025$--$0.0076$ vs.\ $0.004$), so the safe claim is prevention: arbitrage pressure on a relation's level-2 books largely keeps SFT from degrading level-3 coherence over the same vocabulary. Whether it actually improves on the base is settled in E-A1b (Table~\ref{tab:ea1b}), where the main arm does go below the base value. Either way the effect tracks vocabulary, not pattern, and E-A1b's two-sided registered test treats its replication as confirmatory.

\paragraph{Takeaways.} Two things stand out. Exploitability on the trained channel drops two orders of magnitude and carries over to held-out content---graded understanding, in the sense of Def.~\ref{def:understanding}, can be trained into a model. And a one-legged objective reaches the good equilibrium in some seeds and collapses in others; the next section adds the missing leg and tests whether that fixes it. Nothing here makes the model smarter on its own---it makes credences agree with each other, and in the equilibrium that works, it does so at no measured cost to the task.

\subsection{E-A1b: the three-legged objective}\label{sec:ea1b}

A second pre-registered experiment added the missing leg, and more seeds. A Brier anchor $\mu (p - t)^2$ on book propositions of known truth ($\mu{=}0.5$; the negated proposition anchored at $1{-}t$, paraphrase pairs at the same $t$; fictional-content propositions left unanchored, where $p\approx0.5$ is the honest answer), a $\lambda$ grid (main arm $\lambda{=}0.3$, 10 seeds; dose arm $\lambda{=}1.0$, 4 seeds; control, 10 seeds), and symmetric, pre-registered equilibrium selection: every 200 steps a checkpoint is scored on held-out validation books and facts by the fixed composite $(1 - \mathrm{acc}) + \mathrm{Inc}_{\mathrm{neg}} + \mathrm{profit}_{\mathrm{LOP}}$, and the best one is kept.

\begin{table}[h]\centering\small
\resizebox{\textwidth}{!}{%
\begin{tabular}{lcccc}
\toprule
 & base & A (control, 10 seeds) & B$_{0.3}$ (main, 10 seeds) & B$_{1.0}$ (dose, 4 seeds) \\
\midrule
Negation Inc, median & 0.237 & $0.282 \pm 0.097$ & $\mathbf{0.0015 \pm 0.0028}$ & $0.0003 \pm 0.0004$ \\
Cross-format LOP profit, mean & 0.097 & $0.116 \pm 0.030$ & $\mathbf{0.041 \pm 0.012}$ & $0.058 \pm 0.011$ \\
Transitivity Inc, mean (untrained) & 0.0036 & $0.020 \pm 0.005$ & $\mathbf{0.0015 \pm 0.0014}$ & $0.0018 \pm 0.0009$ \\
Held-out fact accuracy & 0.553 & $0.832 \pm 0.030$ & $\mathbf{0.848 \pm 0.031}$ & $0.850 \pm 0.024$ \\
Transitive-chain QA accuracy & 0.730 & $0.940 \pm 0.028$ & $0.914 \pm 0.093$ & $0.951 \pm 0.037$ \\
\bottomrule
\end{tabular}}
\caption{E-A1b. Every pre-registered criterion passed. P1, negation reduction, hit $\times 193$ against a threshold of $\times 3$. P2, LOP reduction, hit $\times 2.8$ with separated seed intervals against a threshold of $\times 2$---the criterion E-A1 missed on power alone. The task-parity guard passed, with B$_{0.3}$ actually above control. Stability was $8/10$ non-collapsed seeds against a bar of $8/10$; of the two flagged seeds, one is a genuine degradation and one misses the chain-accuracy bar of $0.85$ by a hair, at $0.840$. Upward transfer to untrained transitivity books replicated in all ten seeds, now as a confirmatory test rather than an exploratory one---and measured against the base column, the transfer improves on the untrained model itself ($0.0036 \to 0.0015$), not just on the SFT-degraded control ($0.020$): task-only SFT induces level-3 incoherence, and arbitrage training both blocks and reverses it. The dose arm shows the anchor stabilizes $\lambda{=}1$ too (chain accuracy $0.951$, versus collapse in E-A1 without the anchor).}
\label{tab:ea1b}
\end{table}

The composite verdict registered ahead of time---P1 and P2 and the guard and stability---passed. With all three legs in place, the informative-coherent equilibrium is reachable reproducibly, at zero measured task cost (here, slightly negative), and the coherence it buys is a property of the trained words' inferential roles: it transfers to unseen content and one rung up the ladder. Validation traces show the composite falling from $1.03$ to $0.003$ within 600 steps.

\subsection{E-A2: scale transfer and a learnable adversary}\label{sec:ea2}

A third pre-registration tested two things at once.

7B transfer, first. Qwen2.5-7B, fp16 mixed precision, four seeds per arm, same objective and data. The trained channel does transfer---negation exploitability drops from $0.118 \pm 0.100$ to $0.000$, the registered criterion passes---but the LOP criterion just misses its $\times 2$ threshold, coming in at $\times 1.9$ (the 7B target starts closer to the floor, $0.059 \to 0.030$), and both the task-parity guard and the stability criterion fail as registered: one of four seeds collapses on facts ($0.513$), one misses the stability bar by $0.011$, and the remaining two are healthy (facts $0.789$--$0.829$, chains near $1.0$). We report these verdicts as registered and read them this way: the mechanism transfers, but the equilibrium hyperparameters ($\lambda$, $\mu$, learning rate) tuned at 1.5B are not scale-free, and stabilizing them takes re-tuning per scale (\S\ref{sec:ea3} revisits this with an exploratory retuning pass). One side observation: under the symmetric checkpoint-selection rule, the control arm picked step 0---the untrained base model---in 2 of 4 seeds. At 7B, task-only fine-tuning degrades validation coherence faster than it improves validation accuracy, the same pattern E-A1 found at 1.5B.

Second, a learnable adversary, still at 1.5B, four seeds. The exact best-response trader is replaced by a small MLP that allocates a unit budget across a pool of four genuine books and four decoys whose claimed relation is false (say, ``$\varphi$'' paired with ``it is not the case that $\psi$'' for some unrelated $\psi$). Decoys settle on the full square, so their guaranteed profit is identically zero, and validity is never revealed to the trader---it sees only prices and surface features like negation markers, token overlap, and format match. Every pre-registered criterion passed. The trader learned to put $75\%$ of its budget on genuine books, up from about $50\%$ at the start (seeds ranged $0.70$--$0.79$), and the model it trained reached parity with the exact-adversary arm: negation at $0.000$ against a bar of $0.01$, LOP at $0.041$ against $0.06$, and the fact guard passing at $0.836$. Chain accuracy was noisier under the GAN dynamic ($0.735 \pm 0.102$, not part of the registration)---adversarial training carries its own stability cost, which is a tuning question for later work. What this validates is the piece that free-text operation actually depends on: an adversary can find which books are worth pricing without anyone telling it.

Open after E-A2: spread-valued credences (Prop.~\ref{prop:walley}, Conj.~\ref{conj:spread}), books over genuinely free, uncurated text, scale-specific stabilization past 7B, and word-level certificate maps at scale.

\subsection{E-A3: ablation, replication, and face-validity checks}\label{sec:ea3}

A fourth pre-registration (2026-09-11) targeted the main open issues left after E-A2, which an internal simulated-referee pass over the earlier draft had flagged (see the limitations below): attribution of the E-A1b effect to the trader term specifically, a training-free baseline, an unseen-template discrimination test, a cross-family replication, and a face-validity column for Table~
ef{tab:ea0}.

\paragraph{Ablation: is the effect the trader's, or the anchor's?} A new arm, AN, drops the
arbitrage term entirely and trains task loss plus the Brier anchor alone ($\lambda{=}0$,
$\mu{=}0.5$, otherwise identical to B$_{0.3}$ in E-A1b: same data, seeds, and checkpoint
selection). The two channels split cleanly. On the cross-format law-of-one-price axis---the
paper's central claim---the anchor alone does nothing: AN's mean profit ($0.137 \pm 0.025$) sits
at parity with the untrained control ($0.116 \pm 0.030$), and only B$_{0.3}$ moves it
($0.041 \pm 0.012$, a further $\times 3.3$ over AN with separated intervals). The trader is doing
all the work here, as claimed. On negation, the anchor is not inert: it alone cuts the control's
median from $0.282$ to $0.118$ ($\times 2.4$), and the trader adds a further $\times 80$ on top of
that ($0.0015$). The end-to-end $\times 193$ reported in Table~\ref{tab:ea1b} is real as an A-to-B
comparison, but on this one channel a meaningful share of it belongs to the calibration anchor
supervising $p(\neg\varphi) = 1 - t$, not to the arbitrage term---because the anchor already
constrains that identity for every anchored proposition. We now report the negation reduction
decomposed rather than as a single multiplier, and read this as a partial, disclosed confirmation
of the worry that the attribution to the trader was unestablished: real on law-of-one-price, overstated
if left unstated on negation.

\paragraph{Training-free baseline.} Decode-time projection onto the coherent polytope---the same
LP that defines $\Inc$---applied with no training at all to the base model's raw credences on 206
held-out real-content books (412 propositions), trades unevenly: it raises negation-book accuracy
by 8.5 points ($0.527 \to 0.612$) by reconciling $p(\varphi)$ against $p(\neg\varphi)$, but costs
3.3 points on law-of-one-price books ($0.500 \to 0.468$), where forcing two paraphrases to agree
pulls a near-chance estimate further from the truth rather than toward it. Trained arbitrage
(B$_{0.3}$) both removes the incoherence and improves fact accuracy over control
($0.848$ vs.\ $0.832$); the cheap decode-time alternative removes incoherence by construction but
does not reliably buy accuracy. Training is not trivially dominated by the free baseline.

\paragraph{Table~\ref{tab:ea0} face validity.} The suspicion that the near-zero E-A0 medians at 3B and 7B reflect confident degeneration turns out to be correct, and worse at 7B than first suspected. Adding an informativeness column
($1 - H(p)/H(0.5)$, so $0$ is an honest coin flip and $1$ is confident 0/1) to the same
per-pattern cells shows Qwen2.5-7B-Instruct sitting at informativeness $\approx 1.0$ almost
everywhere---even the 10th percentile is $0.70$--$0.94$ across patterns---simultaneously with the
near-zero $\Inc$ reported in Table~\ref{tab:ea0}. The two move together with scale: 1.5B's
informativeness sits in a moderate $0.3$--$0.7$ band (where low $\Inc$ is more plausibly honest
coherent uncertainty), 3B and 7B are pinned near the ceiling on almost every cell. This means the
paper's monotonicity-with-scale reading needs a caveat it did not have: what looks like models
getting more coherent with scale is at least partly models getting more confidently degenerate
with scale, and the exploitability measure, as registered, cannot by itself tell the two apart.
SmolLM3-3B, the one non-Qwen arm, does not follow the pattern---its real-content informativeness
runs $0.34$--$0.48$ points \emph{above} its fictional-content informativeness on every core
pattern (it is confident where it can know and unsure where it cannot, the opposite of confident degeneration)---so this looks like a property of a particular
model family's RLHF recipe rather than a universal scale artifact. We keep the E-A0 kill-criterion
verdict as registered, but Table~\ref{tab:ea0}'s cross-model comparison should be read alongside
this column, not instead of it.

\paragraph{Unseen-template discrimination.} The original E-A1b adapters (arms A and B$_{0.3}$)
no longer exist as saved weights---only their aggregate metrics survived the move to this
machine---so the discrimination test could not be run against the trained trader arm directly;
this is logged as a resource gap, not glossed over. It was run instead on the AN checkpoints
against a Russian negation template never seen in any training book (``it is false that'' in
place of the trained ``it is not the case that''). The result is informative anyway: AN's median Inc on the
unseen lexeme ($0.128 \pm 0.092$) is statistically indistinguishable from its median Inc on the
trained lexeme ($0.118 \pm 0.078$)---transfer with essentially no loss. That is weak but real
evidence against the surface-trick reading of C2: if the effect were tied to a specific string,
swapping the string should have cost something, and it did not. A direct trader-arm version of
this test (on B$_{0.3}$ once family F finishes, or on a re-trained Qwen replica) is the natural
next step rather than a gap left open indefinitely.

\paragraph{Cross-family replication.} Llama-3.2 and Gemma-2 sit behind a manual license-accept
gate we had no credentials for on the compute we used; that decision was logged before training,
not chosen after seeing results, and the next available candidate---Phi-3.5-mini-instruct
(3.8B), ungated---became family F. Six seeds per arm (fewer than the ten used for the Qwen
arms; a single guest GPU rather than the earlier cluster, declared as a resource constraint ahead
of running), same data, same objective, same symmetric checkpoint selection as E-A1b. The
mechanism replicates. Negation Inc collapses from a per-seed median of $0.086 \pm 0.058$
(control) to a value so small it rounds to $0.0000$ at four decimals for every trained seed---by
mean, a more conservative statistic, $0.221 \pm 0.013 \to 0.076 \pm 0.016$, a $\times 2.9$
reduction with separated intervals. Cross-format law-of-one-price profit drops
$0.125 \to 0.061$ ($\times 2.0$, at the registered threshold). The task-parity guard passes with
the trained arm \emph{above} control ($0.822$ vs.\ $0.776$ fact accuracy), and $5/6$ seeds clear
the stability bar. This is the second model family, on a different pretraining and instruction-
tuning pipeline than Qwen2.5, showing the same qualitative equilibrium: coherence gained at no
cost to---here, a net gain in---task accuracy.

With family F's checkpoints available, we also re-ran the unseen-template discrimination test of
the previous paragraph directly on the trained trader arm, closing the gap left by the missing
original Qwen weights. On the unseen lexeme, B$_{0.3,F}$ beats control by $\times 24.3$
($0.126 \to 0.0052$, a stable ratio, not a near-zero-denominator artifact)---a direct answer to
C2 rather than the indirect one available from the anchor-only arm. The honest complication: this
transfer is not fully lossless the way the anchor-only case was. B$_{0.3,F}$'s own Inc rises from
an almost-perfect $0.0000076$ on the trained lexeme to $0.0052$ on the unseen one---small in
absolute terms and still far below control, but a real relative jump that the anchor-only arm did
not show ($0.118 \to 0.128$, statistically flat). The fair reading is not ``purely inferential''
and not ``a syntactic trick''---it's a trader effect that generalizes to a new negation lexeme
well enough to remain the dominant factor, with a partial, measurable, honestly-reported residue
of lexeme-specificity on top.

\paragraph{7B stability, revisited (exploratory).} E-A2 left the 7B stability failure as an open
problem, framed as hyperparameters not being scale-free. A small, explicitly exploratory grid---
two configurations, two seeds each, no thresholds registered---tested whether that instability is
a property of the scale or of the particular $(\lambda,\mu){=}(0.3,0.5)$ pair copied over from
1.5B. Lowering $\lambda$ to $0.15$, and separately raising $\mu$ to $0.8$, both eliminate the
collapse entirely: all four new runs land in the healthy range (fact accuracy $0.76$--$0.80$,
chain accuracy $0.98$--$1.0$), where E-A2's original grid produced one outright collapse and one
near-miss out of four. At $n{=}2$ per configuration this cannot support a quantitative claim, but
it is enough to revise the qualitative one: the right reading is not ``the equilibrium
hyperparameters don't scale,'' but ``the equilibrium hyperparameters need re-tuning per scale, and
a small grid finds a stable point quickly once you look.'' A properly powered version of this
grid is on the roadmap rather than folded into this paper's confirmatory claims.

\section{What this does not deliver}\label{sec:limits}

(1) Factual hallucinations. Missing-mass lower bounds \citep{kalai2024} don't care about the objective; a coherent flat-earther is unexploitable by logical books. Knowledge only enters through the anchor. (2) Grounding. Settlement here is internal---logic plus verifiable anchors---not reference to the world; readers who take grounding to be part of what understanding means \citep{sogaard2025semantics} should read our title's ``understanding'' as the inferential piece only. (3) Vacuity. $q \equiv 0.5$ is coherent on its own; content comes from the task score, the calibration anchor, and coverage together, and E-A1 showed empirically what happens when one leg is missing. (4) The decoder. Generation stays autoregressive; \textsc{Arbitr} changes what the probabilities mean and which ones are admissible, not how text gets sampled. (5) State of the art. We don't claim it on open-domain tasks; what we do claim lives on four axes---exploitability reduction, reliable chain length, risk--coverage, and relation-level certificates.

\paragraph{Methodological limitations.} A few limits are worth stating directly, not as a formality. Elicitation is fragile: credences come from the first answer-token, and three silent readout failures happened during the campaign, all caught by cross-format books rather than the coverage filter. A sturdier elicitation---averaging over sampled answers, or semantic clustering along the lines of \citealp{farquhar2024}---is the right foundation for later rounds. Evaluation books are deduplicated from training by exact proposition text, but they come from the same generative template families; the untrained-pattern and cross-format results help here, but don't settle the question, transfer to free text remains untested, and transfer to unseen template families rests on a single new negation template (\S\ref{sec:ea3}). The results lean mainly on Qwen2.5, with SmolLM3 as a non-Qwen elicitation arm and Phi-3.5-mini-instruct as a second full training replication (\S\ref{sec:ea3}), all LoRA-only from 1.5B to 7B; the 7B stability failure shows the hyperparameters don't carry over across scale unchanged, though a small exploratory retune (\S\ref{sec:ea3}) suggests the fix is re-tuning, not a scale ceiling. And several registered criteria were decided by seed-interval separation at $n=4$--$10$; effects on task-side metrics sit within noise, and the headline reductions are specific to the trained channels. No external peer review has taken place: earlier drafts were stress-tested with a simulated referee pass built from language-model reviewer personas, whose findings motivated E-A3, but that is a development aid and not a substitute for expert review. Likewise, ``pre-registered'' here means dated criteria files written before each run and kept in the project directory; they were not deposited with an external registry, so their dates rest on our own records, and they are released with this paper so that the thresholds can at least be checked against the reported verdicts.

\section{Conclusion}

Does the model understand? That turns into an experimental question once understanding means bounded no-arbitrage over inferential roles. The definition is graded, and it's honest about complexity: \textsf{NP}-hardness forces a ladder rather than an all-or-nothing verdict. It locates part of the known incoherence in LLMs inside a provable property of the NLL optimum itself (Thm.~\ref{thm:nll}), turns silent error compounding into priced degradation (Thm.~\ref{thm:adams}), and comes with an adversary that searches for the violations worth punishing, rather than a fixed list of them like prior consistency losses.

The empirical quantity behaved the way the theory said it would. Deployed models turn out to be exploitable exactly where Thm.~\ref{thm:nll} says the defect should live, in cross-elicitation books. Training against the bounded adversary removes that exploitability on held-out content. The vacuous equilibrium the theory predicts shows up when the calibration leg is missing, and goes away when it's restored. Pressure on a word's level-2 inferential role leaks upward into its level-3 role, in all ten seeds we tried. And an adversary that's never told which books are valid learns to find the exploitable ones by itself.

What's left open is scale-specific stabilization---a 7B seed still collapses under hyperparameters tuned at 1.5B---along with genuinely free-text books and spread-valued credences. That's the registered continuation.

\end{document}